\DeclareMathOperator*{\argmax}{arg\,max}
\begin{document}
\title{Solving Tree Problems with Category Theory}
%
\titlerunning{Solving Tree Problems with Category Theory}
%
\author{Rafik Hadfi}
\authorrunning{R. Hadfi}
\institute{School of Psychological Sciences,\\ Faculty of Medicine Nursing and Health Sciences,\\ Monash University, Australia\\\email{rafik.hadfi@monash.edu}}
\maketitle
\begin{abstract} 
Artificial Intelligence (AI) has long pursued models, theories, and techniques to imbue machines with human-like general intelligence. Yet even the currently predominant data-driven approaches in AI seem to be lacking humans' unique ability to solve wide ranges of problems. This situation begs the question of the existence of principles that underlie general problem-solving capabilities. We approach this question through the mathematical formulation of analogies across different problems and solutions. We focus in particular on problems that could be represented as tree-like structures. Most importantly, we adopt a category-theoretic approach in formalising tree problems as categories, and in proving the existence of equivalences across apparently unrelated problem domains. We prove the existence of a functor between the category of tree problems and the category of solutions. We also provide a weaker version of the functor by quantifying equivalences of problem categories using a metric on tree problems.
\keywords{Artificial General Intelligence \and Problem Solving \and Analogy-Making \and Category Theory \and Functor \and Decision Tree \and Maze Problem \and Transfer Learning}
\end{abstract}

\section{Introduction}

General problem-solving has long been one of main goals of Artificial Intelligence (AI) since the early days of Computer Science. Many theories on generality and problem-solving have been proposed and yet the task of building machines that could achieve human-level intelligence is still in its infancy.

Humans are good at solving problems because they can reason about unknown situations. They are capable of asking hypothetical questions that can effectively be answered through analogical reasoning. Analogical reasoning is when concepts from one space are mapped to the concepts of another space after noticing structural similarities or equivalences between the two. For instance, having observed how a clay vase is being moulded, one could learn to mentally manipulate other clay objects. Similarly, learning to solve one puzzle could be accelerated if one could relate to previously mastered puzzle games.

Solving problems using analogies requires the ability to identify relationships amongst complex objects and transform new objects accordingly. In its canonical form, an analogy is usually described as <<A is to B as C is to D>>. Despite their intuitive appeal, analogies do have the drawback that, if the structure is not shared across the full problem space, we might end up with a distorted understanding of a new problem than if we had not tried to think analogically about it. It is therefore crucial to find a formalism that translates problems into the representation that allows comparisons and transformations on its structures.

Category Theory is a powerful mathematical language capable of expressing equivalences of structures and analogies. It was introduced in 1942--45 by Saunders MacLane and Samuel Eilenberg as part of their work on algebraic topology \cite{eilenberg1945general}. What seemed to be an abstract theory that had no content turned out to be a very flexible and powerful language. The theory has become indispensable in many areas of mathematics, such as algebraic geometry, representation theory, topology, and many others. Category Theory has also been used in modelling the semantics of cognitive neural systems \cite{healy2000category}, in describing certain aspects of cognition such as systematicity \cite{phillips2010categorial,phillips2016systematicity}, in formalising artificial perception and cognition \cite{arzi1999perceive,magnan1994category}, and in advancing our understanding of brain function \cite{ramirez2010new} and human consciousness \cite{tsuchiya2016using}.

In the present work, we propose a category-theoretic formalism for a class of problems represented as arborescences \cite{gordon1989greedoid}. We strongly think that many decision-making and knowledge representation problems are amenable to such structures \cite{diuk2013divide,rasmussen2014hierarchical}. The category-theoretic approach to general problem-solving comes as a qualitative alternative to the currently dominant quantitative, data-driven approaches that rely on Machine Learning and Data Science. We aim at identifying the types or common classes in tree problems using category equivalences. The number of types should be much smaller compared to what data-driven approaches to problem-solving usually yield. It should be easier to identify a new situation by its own type and apply the right transformations to obtain the desired solution. Such transformations will be formalised using functors and aim at computing the solutions to the tree problem in multiple ways.

The main contributions of the paper are twofold. We formalise some the most common problems in AI literature in the most generic way possible and give them an algebraic structure suitable to category theory and its functor-based formulation of analogies. The second contribution is the way we combine the problems and their solutions into two distinct categories, allowing us to define equivalence classes on problems regardless of the existence of solutions.

The paper is structured as following. In the next section, we review some of the previous work on general problem-solving and the usages of analogy. In section 3, we introduce the class of problems we are interested in. In section 4, we show how to translate such problems to a category-theoretic representation. In section 5, we show how solutions could be formalised based on functors and category equivalences. Finally, we conclude and highlight the future directions.
 
\section{Related Work}

General problem-solving is not new in Artificial Intelligence and many authors have proposed guidelines for this line of search \cite{laird2010cognitive,rosa2016framework}. One of the earliest theories of general problem-solving was proposed in \cite{newell1959report} and relied on recursive decompositions of large goals into subgoals while separating problem content from solution strategies. The approach became later known as the cognitive architecture SOAR \cite{laird2012soar} and is amongst the first attempts to a unified theory of cognition \cite{newell1992unified}. In the context of universal intelligence, \cite{hutter2004universal} proposed a general theory that combines Solomonoff induction with sequential decision theory, and was implemented as a reinforcement learning agent called AIXI. The downside is that AIXI is incomputable and relies on approximations \cite{veness2010monte}. Other approaches to generalised intelligence rely on transferring skills or knowledge across problem domains \cite{sharma2007transfer,taylor2009transfer}. For instance, \cite{bonet2009solving,bonet2015policies} focuses on partially observable non-deterministic problems (PONDP) and provides a way of transferring a policy from a PONDP to another one with the same structure

The ability to generalise across different situations has long been the hallmark of analogy-making. One the first attempts to formalise analogies was through the concept of elementary equivalence in logical Model Theory \cite{keisler1990model}. Most recently, deep convolutional neural networks (CNN) have enabled us to solve visual analogies by transforming a query image according to an example pair of related images \cite{liao2017visual,reed2015deep}. The approach does not exploit the regularities between the transformations and seems to follow one particular directed path in the commutative diagram of the problem if expressed in category-theoretic terms. 

As mentioned in the introduction, Category Theory constitutes an elegant framework that can help conceptualise the essence of general problem-solving, and abstract how the different paradigms of AI implement the solutions algorithmically. The practical component of the theory is that it can redefine the algorithms in terms of functors (or natural transformations) across problem and solution categories. However, we think that the real challenge resides in the ability to implement the type of functors that can systematically map input (problem) to output (solution) in a manner similar to what is done in Machine Learning. Although the category-theoretic approach to general problem-solving is still at an early stage of development, the work of \cite{izbicki2013algebraic} can be considered as a recipe for a scalable and systematic usage of functors, albeit in the area of Machine Learning. Particularly, the author defines a training algorithm as a monoid homomorphism from a free monoid representing the data set, to a monoid representing the model we want to train \cite{izbicki2013two}. Most instances of such ``homomorphic trainer'' type class are related to statistics or Machine Learning, but the class is much more general than that, and could for instance be used to approximate NP-complete problems \cite{izbicki2013two}. This approach is shown to improve the learning scalability in the sense that it starts by learning the problem independently on small subsets of the data before merging the solutions together within one single round of communication.

The more general framework of \cite{arjonilla2017general} lays the foundation of a formal description of general intelligence. This framework is based on the claim that cognitive systems learn and solve problems by trial and error \cite{arjonilla2015three}. The authors introduce cognitive categories, which are categories with exactly one morphism between any two objects. The objects of the categories are interpreted as states and morphisms as transformations between those states. Cognitive problems are reduced to the specification of two objects in a cognitive category: the current state of the system and the desired state. Cognitive systems transform the target system by means of generators and evaluators. Generators realise cognitive operations over a system by grouping morphisms, while evaluators group objects as a way to generalise current and desired states to partially defined states. 

For our approach to general problem-solving to work, an agent should not only be capable of solving the problems specific to its native ecological niche, but should also be capable of transcending its current conceptual framework and manipulate the class of the problems itself. This would allow the agent to solve new problems once deployed in new contexts that share some equivalences with the previously encountered contexts. Generalising across different contexts could be achieved for instance using natural transformations mapping functors between known categories of problems and solutions to new ones.

The capacity of the agent to represent and manipulate common structural relationships across equivalently cognizable problem domains is known in cognitive sciences as systematicity \cite{fodor1988connectionism}. In general, it is an equivalence relation over cognitive capacities, a kind of generalisation over cognitive abilities. The problem with systematicity is that it fails in explaining why cognition is organised into particular groups of cognitive capacities. The author in \cite{phillips2017general} hypothesises that the failures of systematicity arise from a cost/benefit trade-off associated with employing one particular universal construction. A universal construction is defined as the necessary and sufficient conditions relating collections of mathematically structured objects. Most importantly, the author proposes adjunction as universal construction for trading the costs and benefits that come from the interaction of a cognitive system with their environment, and where general intelligence involves the effective exploitation of this trade-off.

One distinction between our approach and that of \cite{phillips2017general} is that we not consider the interaction between the agent and the environment for which the adjunction is defined. We only focus on the functor mapping problems to solutions and do not define its adjoint functor. For our goal of general-problem solving, and given the way we define the problem and solution categories, it would not make much sense to look for a problem given its solutions.


\section{Tree Problems}

\subsection {Definition}
We define tree problems as an umbrella term for a class of problems in the area of problem-solving in general and in combinatorial optimisation in particular. While tree problems may be formulated in a number of ways, they all require a rooted arborescent interconnection of objects and an objective function. Given a directed rooted tree with predefined edge labels and a set of terminal vertices, the corresponding tree problem possesses at most one solution. The solution corresponds to a path from the root of the tree to one of its terminal nodes. A problem $\mathcal{P}$ is formally represented by the tuple $T_\mathcal{P}=(T,\mathcal{L}, \mathcal{A})$, defined as following.

\begin{itemize} 
    \item The tuple $T=(r, V, E)$ is a labelled tree with root $r$, a set of nodes $V$, and a set of edges $E \subseteq V \times V$. The set $V$ is partitioned into a set of internal nodes $I$ and a set of terminal nodes $\Omega$. We note $V(T)$ and $E(T)$ as shorthands for the vertices and edges of the tree $T$.
    \item  The tuple $\mathcal{L}=(\mathcal{L}_V, \mathcal{L}_E)$ defines the ``labelling'' functions $\mathcal{L}_V: V \mapsto \mathbb{R}^n$ and $\mathcal{L}_E: E \mapsto \mathbb{R}^m$. The numbers $n$ and $m$ are respectively the numbers of vertice and edge features.
    \item The algorithm $\mathcal{A}: T \mapsto S_\mathcal{P}$ implements an objective function that assigns solution $S_\mathcal{P}$ to $T$.
\end{itemize}

Such tree-based formalism is meant to encode a number of decision problems in the most generic fashion. Such problems could share the same structure as it is defined by the tree and differ only in the labels or features that are assigned to the nodes and edges. In the following, we choose to reduce the space of tree structures and restrict our problems to problems that could be represented as binary trees. It is in fact possible to translate $n$-ary representations to binary representations by transforming branchings like \includegraphics[height=0.25cm]{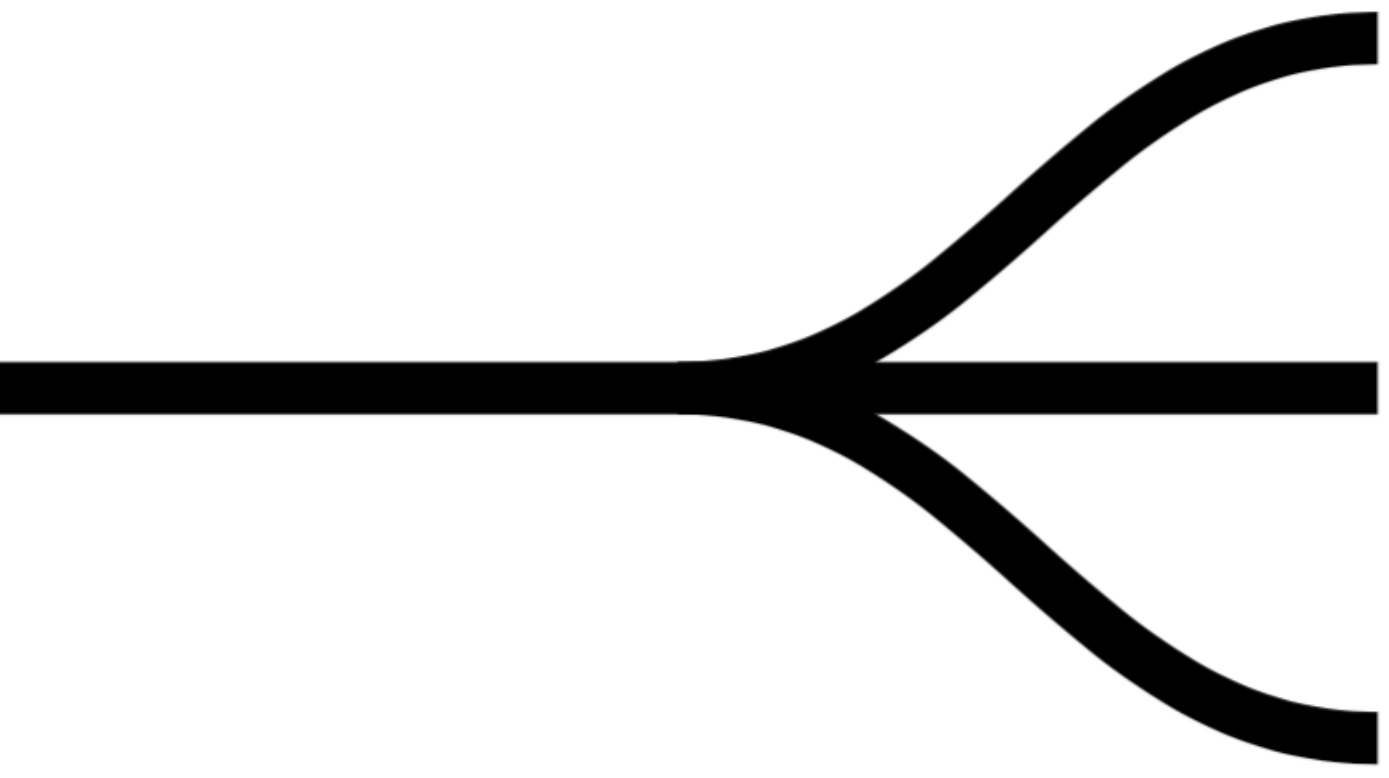} into \includegraphics[height=0.25cm]{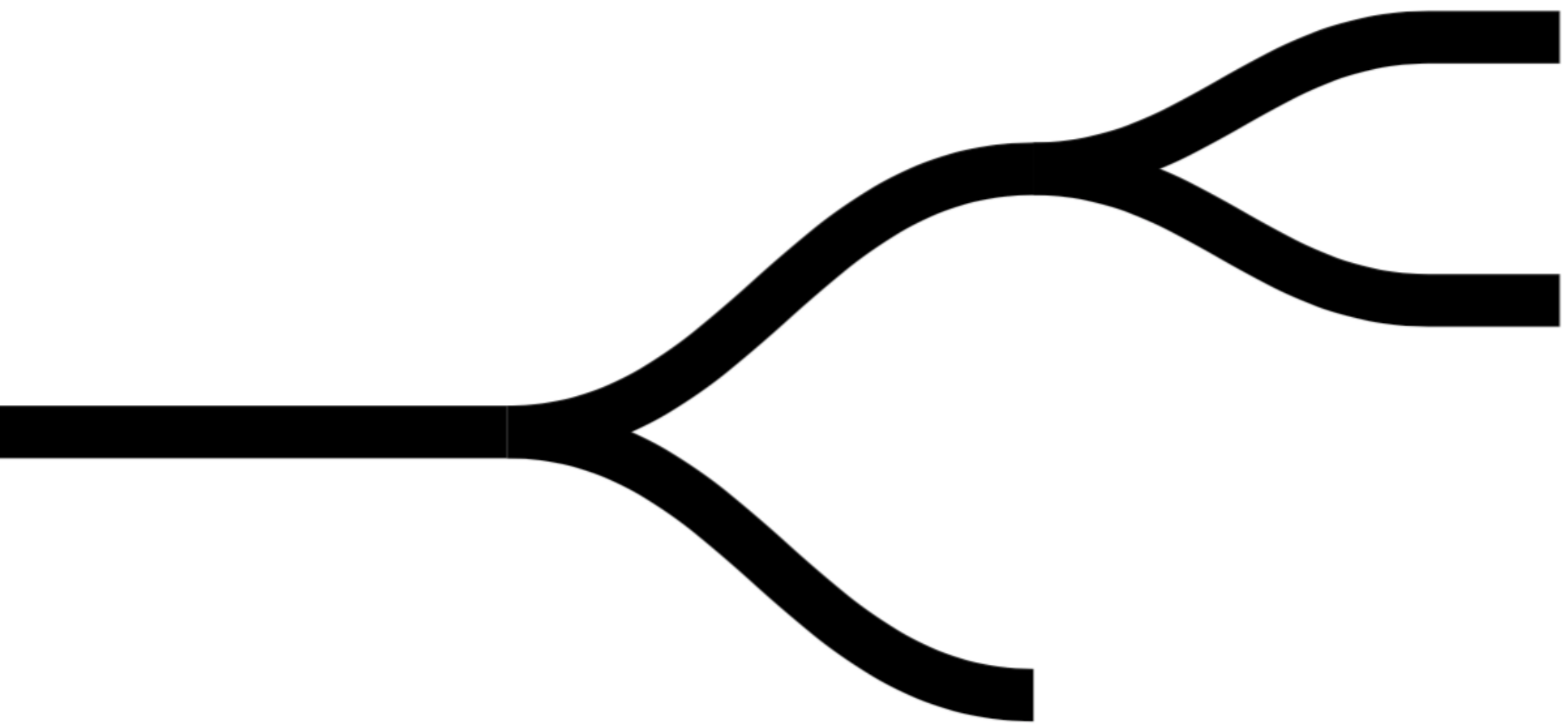} and altering the edge lengths.

In the following, a solution $S_{\mathcal{P}}$ to problem $\mathcal{P}$ will be encoded as a binary vector of the form $S_{\mathcal{P}} \in \{0,1\}^n$. That is, $S_{\mathcal{P}}$ assigns $1$ to its $i^{th}$ entry if edge $e_i$ is in the solution path. Note that it is possible to imagine solutions that do not possess any problem, but we do not addess such cases.

\subsection{Characteristic matrix of a tree problem}
\label{charaMsection}
To find a canonical characterisation of a tree problem we start by defining $\mathbb{T}$ as the set of all rooted trees with $k$ terminal nodes ($|\Omega|=k$). We say that trees $T_a,T_b \in \mathbb{T}$ have the same labeled shape or topology if the set of all the partitions of $\Omega$ admitted by the internal edges of $T_a$ is identical to that of $T_b$, and we write $T_a \simeq T_b$. We say that $T_a = T_b$ if they have the same topology and the same labelling: $\mathcal{L}_{E(T_a)} = \mathcal{L}_{E(T_b)}$. 
For any $T \in \mathbb{T}$, we define $\mu^1_{i,j}$ as the number of edges on the path from the root to the most recent common ancestor of terminal nodes $i$ and $j$ and $\mu^{\ell}_{i,j}$ as the $\ell^{th}$ feature of this edge, and set $p^\ell_{i}$ as the value of the $\ell^{th}$ feature corresponding to the pendant edge to tip $i$. Given all pairs of terminal nodes $\Omega$, we define the characteristic matrix of $T$ as in (\ref{Mmatrix}).
\begin{align}
M(T) = 
\begin{pmatrix}
\mu^{1}_{1,2} & \mu^{1}_{1,3} & \ldots & \mu^1_{i,j} & \ldots & \mu^{1}_{k-1, k} & 1 & \ldots & 1 \\ 
\mu^{2}_{1,2} & \mu^{2}_{1,3} & \ldots & \mu^2_{i,j} & \ldots & \mu^{2}_{k-1, k} & p^{2}_1 & \ldots & p^{2}_k \\
\vdots  & \vdots  & \ddots & \vdots &  & \vdots  &&\ddots & \vdots \\ 
\mu^{\ell}_{1,2} & \mu^{\ell}_{1,3} & \ldots & \mu^{\ell}_{i,j} & \ldots & \mu^{\ell}_{k-1, k} & p^{\ell}_1 & \ldots & p^{\ell}_k \\
\vdots  & \vdots  & \ddots & \vdots &  & \vdots && \ddots& \vdots \\ 
\mu^{m+1}_{1,2} & \mu^{m+1}_{1,3} &\ldots & \mu^{m+1}_{i,j} & \ldots & \mu^{m+1}_{k-1, k} & p^{m+1}_1 & \ldots & p^{m+1}_k \\
 \end{pmatrix}_{m+1\ \times\ \binom{k}{2} + k}
 \label{Mmatrix}
\end{align}

The first row of (\ref{Mmatrix}) captures the tree topology and the other rows capture both the topology and the $m$ features encoded by $\mathcal{L}_E: E \mapsto \mathbb{R}^m$. The feature vectors $M_{j \ge 2}$ are in fact inspired from the vectors of cophenetic values \cite{cardona2013cophenetic}. Note that we have $m+1 < \binom{k}{2} + k$ since the number of edges of the tree usually exceeds the number of features that characterise most basic tree problems. For instance, such features are usually restricted to topology, length, probability, or cost. We finally take the convex combination of the vectors to obtain the characteristic function (\ref{cvx}).
\begin {align}
\phi_{\lambda}(T) = \lambda M 
\label{cvx}
\end {align}
The characteristic form $\phi_\lambda$(T) is parameterised by $\lambda \in [0,1]^{m+1}$ with $\sum_{j=1}^{m+1} \lambda_j = 1$. The elements of $\lambda$ specify the extent to which different tree features contribute in characterising the tree $T$. In this sense, one feature may dominate other features as the elements of $\lambda$ increase from 0 to 1. 
\begin{figure}[h]
\centering
\begin{subfigure}[b]{0.4\columnwidth}
               \begin{tikzpicture}[level distance=3.5cm,
                      level 1/.style={sibling distance=2.5cm},
                      level 2/.style={sibling distance=1.5cm},
                      xcirc/.style={	circle,draw=black,
                      			fill=black, inner sep=0pt,minimum size=6pt},
                      grow=right, scale=0.7]
                      \node  {r}
                        child {node  {$\beta$} 
                          	child {node {$a$} edge from parent  node[below] {$1.2$}}
                         	child {node  {$b$} edge from parent  node[above] {$3.1$}}
            	       edge from parent  node[below] {$0.4$} 
                        }  
                        child {node {$\alpha$}  
                            	child {node {$c$} edge from parent node[below]  {$5.7$} }
                             child {node {$d$}	edge from parent  node[above] {$1$} }
            		edge from parent node[above] {$2.5$}          	
            	  };
                    \end{tikzpicture}
                \caption{Structure and features of $T$}
                \label{treetopinf}
            \end{subfigure}
            \begin{subfigure}[b]{0.3\columnwidth}
            \begin {align}
               M(T) =\ \ \begin{blockarray}{cccccccccc}
                ab & ac & ad & bc & bd & cd & p_a & p_b & p_c & p_d \\
                \begin{block}{(cccccccccc)}
                  1 & 0 & 0 & 0 & 0 & 1 &  1 & 1 & 1 & 1\\
                  0.4 & 0 & 0 & 0 & 0 &  2.5 & 1.2 & 3.1 & 5.7 & 1\\
                \end{block}
                \end{blockarray} \nonumber
            \end {align}
            \caption{Characteristic matrix of $T$}
            \label{chmatrix}
        \end{subfigure}
\caption{Tree problem and its matrix representation}
\label{mazetrees}
\end{figure}
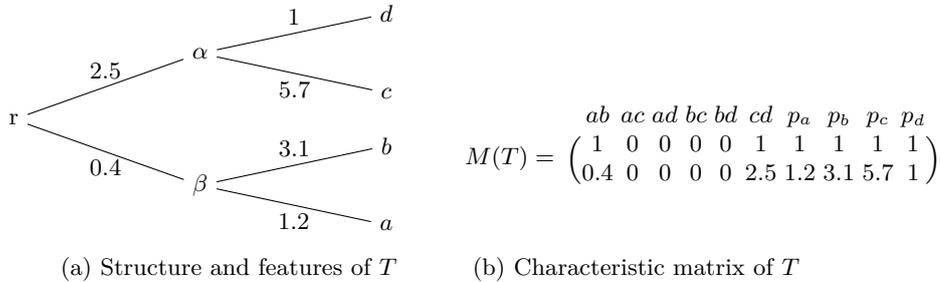

For instance, the tree in figure \ref{treetopinf} is characterised by its topology and one feature corresponding to the length of its branches. The corresponding characteristic matrix $M(T)$ is given in figure \ref{chmatrix}. Note that the matrix $M(T)$ is constructed from the mappings defined by $\mathcal{L}_E(T)$. For instance, the first column of $M(T)$ is in fact $\mathcal{L}_{E(T)}((r, \beta)) = (1, 0.4)$.

\subsection {Instances of tree problems}
It is possible to find many instances of problems in AI that are reducible to tree structures. For instance, simply connected mazes are mazes that contain no loops or disconnected components. Such mazes are equivalent to a rooted tree in the sense that if one pulled and stretched out the paths in the maze in a continuous way, the result could be made to resemble a tree \cite{Maze2Tree}. Mathematically, the existence of a continuous deformation between the maze and a rooted tree means that they are homeomorphic. For two spaces to be homeomorphic we only need a continuous mapping with a continuous inverse function. A homeomorphism or topological isomorphism is a continuous function between topological spaces that has a continuous inverse function. The existence of such mapping is what will be exploited in our approach by moving between problems' space and solutions' space in a well-principled manner. Instead of working directly on complex structures like mazes, one could convert them to trees, and then study the existence of homomorphisms \cite{csikvari2013graph} and other transformations.

For example, figure \ref{mazetree} illustrates a maze search problem and its homologous decision tree problem. In figure \ref{fig:sub:Maze}, $\mathcal{L}_E: E \mapsto \mathbb{R}$ assigns lengths to the edges of the tree $T$, and $\mathcal{L}_V: \Omega \mapsto \mathbb{R}$ assigns outcomes to the terminal nodes $\Omega$ comprised of goal node(s) and dead-ends.

\begin{figure}[h]
\centering
\begin{subfigure}{0.4\textwidth}
\begin{tikzpicture}[scale=0.33, path/.style={black, thick, <-, >=stealth,rounded corners=0.2cm, shorten >=2pt}]
\draw[step=20cm, gray, very thin] (0, -10) grid (16, 0);
\fill[black] (0, 0) rectangle (1, -1);
\fill[black] (1, 0) rectangle (2, -1);
\fill[black] (2, 0) rectangle (3, -1);
\fill[black] (3, 0) rectangle (4, -1);
\fill[black] (4, 0) rectangle (5, -1);
\fill[black] (5, 0) rectangle (6, -1);
\fill[black] (6, 0) rectangle (7, -1);
\fill[black] (7, 0) rectangle (8, -1);
\fill[black] (8, 0) rectangle (9, -1);
\fill[black] (9, 0) rectangle (10, -1);
\fill[black] (10, 0) rectangle (11, -1);
\fill[black] (11, 0) rectangle (12, -1);
\fill[black] (12, 0) rectangle (13, -1);
\fill[black] (13, 0) rectangle (14, -1);
\fill[black] (14, 0) rectangle (15, -1);
\fill[black] (15, 0) rectangle (16, -1);
\fill[black] (0, -1) rectangle (1, -2);
\fill[black] (0, -2) rectangle (1, -3);
\fill[black] (2, -2) rectangle (3, -3);
\fill[black] (3, -2) rectangle (4, -3);
\fill[black] (4, -2) rectangle (5, -3);
\fill[black] (5, -2) rectangle (6, -3);
\fill[black] (6, -2) rectangle (7, -3);
\fill[black] (7, -2) rectangle (8, -3);
\fill[black] (8, -2) rectangle (9, -3);
\fill[black] (9, -2) rectangle (10, -3);
\fill[black] (10, -2) rectangle (11, -3);
\fill[black] (0, -3) rectangle (1, -4);
\fill[black] (2, -3) rectangle (3, -4);
\fill[black] (10, -3) rectangle (11, -4);
\fill[black] (12, -3) rectangle (13, -4);
\fill[black] (0, -4) rectangle (1, -5);
\fill[black] (2, -4) rectangle (3, -5);
\fill[black] (4, -4) rectangle (5, -5);
\fill[black] (5, -4) rectangle (6, -5);
\fill[black] (6, -4) rectangle (7, -5);
\fill[black] (7, -4) rectangle (8, -5);
\fill[black] (8, -4) rectangle (9, -5);
\fill[black] (10, -4) rectangle (11, -5);
\fill[black] (12, -4) rectangle (13, -5);
\fill[black] (13, -3) rectangle (14, -4);
\fill[black] (15, -4) rectangle (16, -5);
\fill[black] (0, -5) rectangle (1, -6);
\fill[black] (2, -5) rectangle (3, -6);
\fill[black] (8, -5) rectangle (9, -6);
\fill[black] (10, -5) rectangle (11, -6);
\fill[black] (0, -6) rectangle (1, -7);
\fill[black] (2, -6) rectangle (3, -7);
\fill[black] (6, -6) rectangle (7, -7);
\fill[black] (7, -6) rectangle (8, -7);
\fill[black] (8, -6) rectangle (9, -7);
\fill[black] (12, -6) rectangle (13, -7);
\fill[black] (13, -6) rectangle (14, -7);
\fill[black] (15, -6) rectangle (16, -7);
\fill[black] (0, -7) rectangle (1, -8);
\fill[black] (2, -7) rectangle (3, -8);
\fill[black] (6, -7) rectangle (7, -8);
\fill[black] (0, -8) rectangle (1, -9);
\fill[black] (2, -8) rectangle (3, -9);
\fill[black] (4, -8) rectangle (5, -9);
\fill[black] (10, -8) rectangle (11, -9);
\fill[black] (0, -9) rectangle (1, -10);
\fill[black] (6, -9) rectangle (7, -10);
\fill[black] (8, -9) rectangle (9, -10);
\fill[black] (14, -9) rectangle (15, -10);
\fill[black] (15, -9) rectangle (16, -10);
\fill[black] (15, -8) rectangle (16, -9);
\fill[black] (15, -7) rectangle (16, -8);
\fill[black] (15, -6) rectangle (16, -7);
\fill[black] (15, -5) rectangle (16, -6);
\fill[black] (15, -4) rectangle (16, -5);
\fill[black] (14, -6) rectangle (15, -7);
\fill[black] (12, -5) rectangle (13, -6);
\fill[black] (15, -3) rectangle (16, -4);
\fill[black] (15, -2) rectangle (16, -3);
\fill[black] (15, -1) rectangle (16, -2);
\fill[black] (13, -9) rectangle (14, -10);
\fill[black] (12, -9) rectangle (13, -10);
\fill[black] (11, -9) rectangle (12, -10);
\fill[black] (10, -9) rectangle (11, -10);
\fill[black] (9, -9) rectangle (10, -10);
\fill[black] (8, -9) rectangle (9, -10);
\fill[black] (7, -9) rectangle (8, -10);
\fill[black] (6, -9) rectangle (7, -10);
\fill[black] (5, -9) rectangle (6, -10);
\fill[black] (4, -9) rectangle (5, -10);
\fill[black] (3, -9) rectangle (4, -10);
\fill[black] (2, -9) rectangle (3, -10);
\fill[black] (1, -9) rectangle (2, -10);
\fill[black] (1, -9) rectangle (2, -10);
\fill[black] (4, -6) rectangle (5, -7);
\fill[black] (4, -7) rectangle (5, -8);
\fill[black] (6, -8) rectangle (7, -9);

\fill[black!50!white] (13.5, -8.5) circle (0.3cm);
\fill[black!50!white] (1.5, -6.5) circle (0.3cm);
    
    \draw[above right] (13.5, -8.5) node(s){$r$};
    \draw[below] (1.5, -6.6) node(s){$\omega$};
    \draw[above] (13.2, -2.6) node(s){$\ell$};
    
    \draw [path] (3.5, -8.5) -- (3.5, -3.5) -- (4.5, -3.5) -- (5.5, -3.5) -- (6.5, -3.5) -- (7.5, -3.5) -- (8.5, -3.5) 
    		-- (9.5, -3.5) -- (9.5, -4.5) -- (9.5, -5.5) -- (9.5, -6.5) -- (9.5, -7.5) -- (13.5, -7.5) -- (13.5, -8.5) ;
    \draw [path] (1.5, -5.5) -- (1.5, -1.5) -- (11.5, -1.5) -- (11.5, -7.5) -- (13.5, -7.5) -- (13.5, -8.5) ;
    \draw [path]  (13.5, -4.5) --  (13.5, -5.5) --  (14.5, -5.5) --  (14.5, -2.5) -- (11.5, -2.5) -- (11.5, -7.5) -- (13.5, -7.5) -- (13.5, -8.5) ;
    \draw [path]   (9.5, -8.5) -- (8.5, -8.5) --  (7.5, -8.5) --  (7.5, -7.5) -- (10.5, -7.5) ;
    \draw [path]  (7.5, -5.5)-- (3.4, -5.5) ;
    \draw [path]  (5.5, -8.5) -- (5.5, -5.4) ;

\end{tikzpicture}

    \caption{Maze Search Problem}
    \label{fig:sub:Maze}
\end{subfigure}
\begin{subfigure}{0.49\textwidth}
        \begin{tikzpicture}[level distance=2.5cm,
          level 1/.style={sibling distance=2.7cm},
          level 2/.style={sibling distance=1.5cm},
          xcirc/.style={circle,draw=black, fill=white, inner sep=0pt,minimum size=6pt},
          grow=right, scale=0.7]
          \node {r}
            child {node [xcirc] {} 
              	child {node {$\omega_1$}
	edge from parent  node[below] {$1-p_3$}
	}
             	child {node [xcirc] {} 
                	 	 child {node[xcirc]  {} 
            		  	child {node {$\omega_2$}}       
            		 	child {node {$\omega_3$} edge from parent  node[above] {$p_5$}}
        	  			}      
            		child {node {$\omega_4$} edge from parent  node[above] {$p_4$}}   }
            }  
            	child {node [xcirc] {}  
                		child {node {$\omega_5$}}
                 	child {node {$\omega_6$}	 edge from parent  node[above] {$p_2$} }
		 	edge from parent  node[above] {$p_1$}          
            		};
        \end{tikzpicture}
            \caption{Decision Tree Problem}
            \label{fig:sub:Dtree}
        \end{subfigure}

\caption{Two Homeomorphic Problems}
\label{mazetree}
\end{figure}
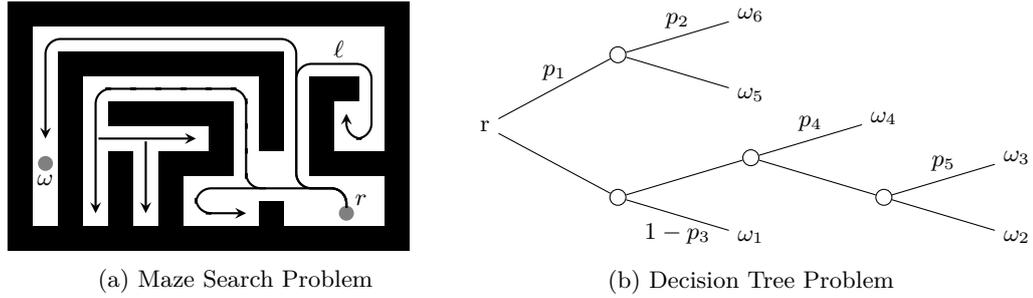

In the decision tree of figure \ref{fig:sub:Dtree}, $\mathcal{L}_V: \Omega \mapsto \mathbb{R}$ maps terminal nodes to outcomes and $\mathcal{L}_E: E \mapsto [0,1]$ maps sub-branches to the probabilities of being chosen. An example of implementation of an algorithm $\mathcal{A}$ for the decision tree of figure \ref{fig:sub:Dtree} could be defined as $\mathcal{A}: T \mapsto \mathscr{P}(E)$ with $\mathscr{P}(E)$ being the set of all paths of $T$. For instance, if the objective function is to find the most probable path in the tree, then the solution could be expressed as in (\ref{logpe}).
\begin{align}
\pi^{*} = \argmax\limits_{\pi \in \{\pi_1, \ldots, \pi_6\}} \sum\limits_{e \in \pi} \log \mathbb{P}(e)
\label{logpe}
\end{align}
with $\mathbb{P}(e)=\mathcal{L}_E(e)$ being the probability of edge $e$. Other formulations of (\ref{logpe}) could include for instance the preferences over the edges and define the goal as maximising some expected value.

\section{Translating Tree Problems to Categories}

\subsection{Overview of Category Theory}
\label{overviewCT}

In the following, we give a short introduction to Category Theory and the components relevant to the topic of general problem-solving as previously introduced. For a thorough and in-depth explanation of Category Theory from a mathematical point of view, the reader is advised to use the classical book \cite{mac2013categories}, and to \cite{pierce1991basic,walters1991categories} for the Computer Science point of view.

A category $\mathcal{C}$ is a collection of objects and a collection of arrows called morphisms. It is formally defined as following.

\begin{enumerate} 
\item A class of objects $Ob(\mathcal{C})$. For $X \in Ob(\mathcal{C})$, we can also write $X \in \mathcal{C}$.
\item For every objects $X, Y \in Ob(\mathcal{C})$, the class ${Mor}_{\mathcal{C}}(X,Y)$ defines the class of morphisms from $X$ to $Y$. For $f \in {Mor}_{\mathcal{C}}$, one may also write $f: X \rightarrow Y$. For any objects $X, Y, Z \in Ob(\mathcal{C})$, a composition map $\circ_{X, Y, Z}: {Mor}_{\mathcal{C}}(Y, Z) \times {Mor}_{\mathcal{C}}(X, Y) \rightarrow {Mor}_{\mathcal{C}}(X, Z)$, $(f, g) \mapsto f \circ g$ satisfies:
\begin{enumerate}
	\item Associativity: $(f \circ g) \circ h = f \circ (g \circ h)$
	\item Identity: For each $X \in Ob(\mathcal{C})$, there is a morphism $1_X \in {Mor}_{\mathcal{C}}(X, X)$, called the unit morphism, such that $1_X \circ f = f$ and $g \circ 1_X = g$ for any $f, g$ for which composition holds.
\end{enumerate}
 \end{enumerate}


Another useful category-theoretic construct is the notion of (covariant) functor, which is a morphism of categories. Given two categories $\mathcal{C}$ and $\mathcal{C}'$, a functor $F: \mathcal{C} \rightarrow \mathcal{C}'$ is made of
\begin{enumerate}
	\item A function mapping objects to objects $F: Ob(\mathcal{C}) \rightarrow Ob(\mathcal{C}')$.
	\item For any pair of objects $X, Y \in \mathcal{C}$, we have $F: {Mor}_{\mathcal{C}}(X, Y) \rightarrow {Mor}_{\mathcal{C}'}(F(X), F(Y))$ with the natural requirements of identity and composition:
    \begin{enumerate}
    	\item Identity: $F(1_X) = 1_{F(X)}$
    	\item Composition: $F(f \circ g) = F(f) \circ F(g)$
    \end{enumerate}
\end{enumerate}

Functors will be later used to formalise analogies across problem and solution categories.

\subsection{Problems as categories}
In section \ref{charaMsection}, we have shown that any tree $T_a$ could be encoded as a matrix $M_a$. In theorem \ref{treeCat}, we show that tree problems are in fact a category and we name it $\mathcal{T}$.

\begin{theorem}
Tree problems define a category $\mathcal{T}$.
\label{treeCat}
\end{theorem}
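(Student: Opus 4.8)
The plan is to exhibit the three data of a category---objects, morphisms, composition---and then verify the three axioms recalled in Section \ref{overviewCT} (associativity, identity, and implicitly closure of composition). First I would fix the class of objects $Ob(\mathcal{T})$ to be the tree problems $T_\mathcal{P}=(T,\mathcal{L},\mathcal{A})$ of Section 3, or equivalently, via the encoding of Section \ref{charaMsection}, their characteristic matrices. The delicate point is choosing the right notion of morphism. I would define a morphism $f\colon T_{\mathcal{P}}\to T_{\mathcal{Q}}$ to be a root-preserving map of the underlying trees, that is, a pair $(f_V,f_E)$ with $f_V\colon V(T_{\mathcal{P}})\to V(T_{\mathcal{Q}})$ and $f_E\colon E(T_{\mathcal{P}})\to E(T_{\mathcal{Q}})$ that sends root to root, respects incidence (so $(u,v)\in E$ implies $(f_V(u),f_V(v))\in E$), preserves the partition $V=I\sqcup\Omega$ into internal and terminal nodes, and is compatible with the labellings $\mathcal{L}_V,\mathcal{L}_E$ in whatever sense is needed to respect the characteristic matrix up to the topology equivalence $\simeq$.

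Second, I would define composition of $f\colon T_{\mathcal{P}}\to T_{\mathcal{Q}}$ and $g\colon T_{\mathcal{Q}}\to T_{\mathcal{R}}$ coordinatewise as ordinary function composition, $g\circ f:=(g_V\circ f_V,\ g_E\circ f_E)$, and then check closure: the composite of two root-preserving, incidence-respecting, partition-preserving, label-compatible maps again has each of these properties, because each property is individually stable under composition of the underlying set maps. This closure verification is the only step that is not wholly automatic, since it is precisely where the concrete definition of morphism is stress-tested.

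Third, associativity is inherited verbatim from associativity of function composition in the category $\mathbf{Set}$ of sets and functions, applied in each coordinate, so $(h\circ g)\circ f=h\circ(g\circ f)$ needs no extra work. For identities, to each object $T_{\mathcal{P}}$ I would assign $1_{T_{\mathcal{P}}}:=(\mathrm{id}_{V},\mathrm{id}_{E})$, observe that it trivially satisfies all the morphism requirements, and note that $1_{T_{\mathcal{Q}}}\circ f=f$ and $f\circ 1_{T_{\mathcal{P}}}=f$ follow from the unit laws of $\mathbf{Set}$.

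The main obstacle I expect is not the axiom-checking, which becomes routine once the setup is fixed, but rather committing to a precise and useful definition of morphism---one broad enough to capture the analogies and homeomorphisms the paper cares about in Section 3.3, yet strict enough that composition stays inside the class and that the later functor into the solution category is well-defined. In particular, the real design choice is deciding how a morphism should interact with the solving algorithm $\mathcal{A}$: whether $\mathcal{A}$ is part of the structure a morphism must preserve, or merely extra data carried along. The whole construction, and the smoothness of the subsequent functor, rests on getting this choice right.
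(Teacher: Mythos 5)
Your construction is sound as category theory, but it is a genuinely different route from the paper's. You build $\mathcal{T}$ concretely over $\mathbf{Set}$: morphisms are pairs $(f_V,f_E)$ of root-preserving, incidence-respecting, partition-preserving maps of the underlying labelled trees, with composition, associativity, and identities all inherited coordinatewise from function composition. The paper instead works entirely on the linear-algebraic side of the encoding of Section \ref{charaMsection}: objects are the characteristic matrices $M(T)$ themselves, and a morphism from $X$ to $Y$ is a matrix $A$ with $AX^{T}=Y^{T}$, constructed explicitly via the Moore--Penrose pseudoinverse $X^{+}=Q\Delta^{-1}P^{T}$ from the singular value decomposition; composition is then matrix multiplication, associativity is associativity of matrix products, and the identity morphism is (in effect) the identity matrix. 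The trade-off is instructive. Your approach makes the axiom-checking essentially automatic and keeps the combinatorial tree structure visible, but---as you yourself anticipate in your closing paragraph---strict label compatibility leaves very few morphisms between differently-labelled problems, which works against the paper's stated goal of relating \emph{arbitrary} tree problems by analogy. The paper's matrix morphisms buy exactly that richness: a transformation $A=YQ\Delta^{-1}P^{T}$ can be written down between (nearly) any pair of problems with matching dimensions, which is what later powers the functor to $\mathcal{S}$ and the metric $d_\lambda$ on $Ob(\mathcal{T})$. The cost on the paper's side is rigor you get for free: $AX^{T}=Y^{T}$ holds exactly only under rank conditions on $X$ (otherwise the pseudoinverse yields a least-squares solution), a point the paper glosses over, whereas your closure and identity checks are airtight once the label-compatibility clause is fixed. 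Note also that your worry about whether morphisms must interact with the algorithm $\mathcal{A}$ is resolved by the paper implicitly: its objects are the matrices alone, so $\mathcal{A}$ plays no role in the categorical structure.
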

\begin{proof}

In order for $\mathcal{T}$ to be a category, we need to characterise its objects $Ob(\mathcal{T})$, morphisms ${Mor}_{\mathcal{T}}$, and the laws of composition that govern ${Mor}_{\mathcal{T}}$. 

\begin{itemize}

\item Objects: Since each tree is translatable to its characteristic matrix, we will take $Ob(\mathcal{T})$ to be the set of matrices that encode the trees.

\item Morphisms: 
One analytical way of distinguishing between two tree problems is through the existence of a transformation that maps one to the other. These transformations, if they exist, are the morphisms of the category $\mathcal{T}$ that we want to characterise. That is, we need to define the morphisms and their laws of composition, and show that the identity and associativity of morphisms hold. To define ${Mor}_\mathcal{T}$, we define a morphism between two tree matrices $X_{(m, n)}$ and $Y_{(m, n)}$ as the transformation $A_{(n, n)}$ such as $A_{(n, n)} X_{(n, m)}^T = Y_{(n, m)}^T$. Since the number of tree edges usually exceeds the number of features ($n>m$), we need to find the generalised inverse of $X_{(n, m)}^T$ that satisfies (\ref{matA}).
\begin {align}
A_{(n, n)} &= Y_{(n, m)}^T (X_{(n, m)}^T)^{-1} \nonumber  \\
& =Y_{(n, m)}^T X_{(m, n)}^{-1} 
\label{matA}
\end {align} 
To obtain $X^{-1}$, we use the singular value decomposition of $X$ into $P$, $Q$ and $\Delta$, as in (\ref{singvaldecomp}).
\begin {align}
X = P \Delta\ Q^T
\label{singvaldecomp}
\end {align}
where $P$ is an $n \times r$ semiorthogonal matrix, $r$ is the rank of $X$, $\Delta$ is an $r \times r$ diagonal matrix with positive diagonal elements called the singular values of $X$, and $Q$ is an $m \times r$ semiorthogonal matrix. The Moore-Penrose pseudoinverse \cite{grevillegeneralized} of $X$, denoted by $X^{+}$, is the unique $m \times n$ matrix defined by $X^{+} = Q \Delta^{-1} P^T$. The final transformation matrix $A_{(n, n)}$ is therefore computed as (\ref{transfA}).
\begin {align}
A = Y Q \Delta^{-1} P^T
\label{transfA}
\end {align}

The existence of $X^{+}$ and $A$ is guaranteed by the nature of the feature matrices and the fact that $m<\binom{k}{2} + k - 1$, with $k$ being the number of terminal nodes. In the following, we will be using morphisms and matrix transformation interchangeably. After defining the morphisms of $\mathcal{T}$, we prove that the composition laws within $\mathcal{T}$ hold.

\item Composition: Let $f, g \in {Mor}_{\mathcal{T}}$ with $f: T_{\mathcal{P}} \rightarrow T_{\mathcal{P}'}$ and $g: T_{\mathcal{P}'} \rightarrow T_{\mathcal{P}''}$. Given matrices $A_f$ and $A_g$ of $f$ and $g$, and matrices $M_{\mathcal{P}}$, $M_{\mathcal{P}'}$ and $M_{\mathcal{P}''}$ of $T_{\mathcal{P}}$, $T_{\mathcal{P}'}$ and $T_{\mathcal{P}''}$, we have (\ref{compos}).
\begin{subequations}
        \begin{align}
        M_{\mathcal{P}''} &= A_g M_{\mathcal{P}'}  \\
        &= A_g (A_f M_{\mathcal{P}})\\
        &= (A_g A_f )M_{\mathcal{P}}\\
        &= A_{g \circ f} M_{\mathcal{P}}\\
        &= A_h M_{\mathcal{P}}
        \end{align}
    \label{compos}
\end{subequations}
It follows that there exists a morphism $h$ such that $h: T_{\mathcal{P}} \rightarrow T_{\mathcal{P}''}$. Therefore, the composition of morphisms holds and we have (\ref{morphholds}).
\begin{align}
\forall T_\mathcal{P}, T_\mathcal{P}', T_\mathcal{P}'' \in Ob(\mathcal{T})\ \ \ 
{Mor}_\mathcal{T}(T_\mathcal{P}, T_{\mathcal{P}'}) \times {Mor}_\mathcal{T}(T_{\mathcal{P'}}, T_{\mathcal{P}''}) \mapsto {Mor}_\mathcal{T}(T_\mathcal{P}, T_{\mathcal{P}''})
 \label{morphholds}
\end{align} 
The laws of composition need to obey the following.
\begin{enumerate}
\item Associativity: Let $f, g, h \in {Mor}_{\mathcal{T}}$ and their corresponding matrix transformations $A_f$, $A_g$ and $A_h$. Since matrix multiplication is associative $(A_f A_g) A_h=A_f (A_g A_h)$, we have $(f \circ g) \circ h = f \circ (g \circ h)$. 
\item Identity:
Let $f: T_{\mathcal{P}} \rightarrow T_{\mathcal{P}'}$ be a tree morphism and its characteristic matrix transformation $A_f$ that maps $M_{\mathcal{P}}$ to $M_{\mathcal{P}'}$. Let $1_{T_{\mathcal{P}}}: T_{\mathcal{P}} \rightarrow T_{{\mathcal{P}'}}$ be an identity morphism. It must hold that $1_{T_{\mathcal{P}'}} \circ f = f = f \circ 1_{T_\mathcal{P}}$ where $1_{T_\mathcal{P}}$ creates a trivial representation of $T_{\mathcal{P}}$ containing the same structure and features. Similarly, $1_{T_{\mathcal{P}'}}$ creates a trivial representation of $T_{\mathcal{P}'}$. Hence, there exists an identity morphism for all $Obj(\mathcal{T})$. This translates to the existence of identity matrix $A_f$ such as $A_f \times M_{\mathcal{P}} =  M_{\mathcal{P}}$
\end{enumerate}

\end{itemize}

$\mathcal{T}$ is therefore a category and we can illustrate it with the commutative diagram of figure \ref{CommutDiag}.

\end{proof}

\subsection{Solutions as categories}
Similarly to the tree problems category, theorem \ref{solCat} defines the solutions as the category $\mathcal{S}$.

\begin{theorem}
The solutions to tree problems define a category $\mathcal{S}$.
\label{solCat}
\end{theorem}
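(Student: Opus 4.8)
The plan is to mirror the construction carried out for $\mathcal{T}$ in Theorem \ref{treeCat}, treating each solution as an object and the transformations between solutions as morphisms. Since a solution is encoded as a binary vector $S_\mathcal{P} \in \{0,1\}^n$, I would take $Ob(\mathcal{S})$ to be the set of all such solution vectors, viewed as $n \times 1$ matrices. This makes them a special case of the characteristic matrices already handled in Section \ref{charaMsection}, with the feature dimension specialised to a single column, so much of the earlier machinery transfers directly.

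For the morphisms, I would define a morphism between two solutions $S_\mathcal{P}$ and $S_{\mathcal{P}'}$ as a transformation matrix $B$ satisfying $B\, S_\mathcal{P} = S_{\mathcal{P}'}$. As in the proof of Theorem \ref{treeCat}, the existence of such a $B$ is guaranteed by the Moore-Penrose pseudoinverse: writing the singular value decomposition of $S_\mathcal{P}$ as in (\ref{singvaldecomp}) and taking $B = S_{\mathcal{P}'} S_\mathcal{P}^{+}$ yields a well-defined transformation, exactly paralleling (\ref{transfA}). Composition is then inherited from matrix multiplication as in (\ref{compos}): given $f: S_\mathcal{P} \to S_{\mathcal{P}'}$ and $g: S_{\mathcal{P}'} \to S_{\mathcal{P}''}$ with matrices $B_f$ and $B_g$, the product $B_g B_f$ realises a morphism $g \circ f: S_\mathcal{P} \to S_{\mathcal{P}''}$. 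Associativity follows from associativity of matrix multiplication, and the identity morphism on each $S_\mathcal{P}$ is realised by the identity matrix, so the two categorical axioms hold and $\mathcal{S}$ is a category.

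The step I expect to be the main obstacle is reconciling the binary constraint with the linear-algebraic morphisms. A product $B\, S_\mathcal{P}$ computed through the real-valued pseudoinverse need not land back in $\{0,1\}^n$, so either the morphisms must be restricted to those transformations that preserve the binary structure, or the objects must be reinterpreted as distinguished points inside the ambient space $\mathbb{R}^n$. A second subtlety is that solutions to different tree problems may have different dimensions $n$, since the underlying trees carry different numbers of edges; this forces the transformation matrices to be rectangular and requires care when composing morphisms across solutions of differing sizes. Both issues I would resolve in the same spirit as Theorem \ref{treeCat}, by working in the real vector space spanned by the solution encodings and relying on the pseudoinverse to supply the needed transformations, so that the binary vectors are recovered as the admissible objects of $\mathcal{S}$.
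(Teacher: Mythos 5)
Your proposal takes essentially the same route as the paper, whose entire proof is a one-liner stating that the argument of Theorem \ref{treeCat} carries over verbatim, the only difference being that objects of $\mathcal{S}$ are $1 \times n$ matrices since solutions are binary vectors in $\{0,1\}^n$; your construction $B = S_{\mathcal{P}'} S_{\mathcal{P}}^{+}$ is exactly the specialisation of (\ref{transfA}) the paper intends. The two obstacles you flag --- that $B\,S_{\mathcal{P}}$ need not remain in $\{0,1\}^n$ and that solutions of different problems may have different dimensions $n$ --- are genuine, but the paper silently ignores both, so your write-up is if anything more careful than the original.
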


\begin{proof}
The proof is similar to the proof of theorem \ref{treeCat}. The difference is that the elements of  $Obj(\mathcal{T})$ are $m \times n$ matrices while the elements of $Obj(\mathcal{S})$ are $1 \times n$ matrices since the solutions are binary vectors in $\{0, 1\}^n$.
The commutative diagram of $\mathcal{S}$ is shown in figure \ref{CommutDiagS}.
\end{proof}
\begin{figure}[h]
\centering
\begin{subfigure}{0.4\textwidth}
            \begin{tikzcd}[row sep=2em,column sep=2em]
            & T_{\mathcal{P}} \arrow[ddl,swap,"f"] \\ \\
            T_{\mathcal{P}'} \arrow[rr,swap,"g"] &&
            T_{\mathcal{P}''} \arrow[uul,swap,leftarrow, "g \circ f"]
            \end{tikzcd}
    \caption{Category $\mathcal{T}$}
    \label{CommutDiag}
\end{subfigure}
\begin{subfigure}{0.4\textwidth}
       \begin{tikzcd}[row sep=2em,column sep=2em]
            & S_{\mathcal{P}} \arrow[ddl,swap,"f'"] \\ \\
            S_{\mathcal{P}'} \arrow[rr,swap,"g'"] &&
            S_{\mathcal{P}''} \arrow[uul,swap,leftarrow, "g' \circ f'"]
            \end{tikzcd}
    \caption{Category $\mathcal{S}$}
    \label{CommutDiagS}
        \end{subfigure}
\caption{Commutative diagrams of categories $\mathcal{T}$ and $\mathcal{S}$}
\label{CommutativeDiagrams}
\end{figure}
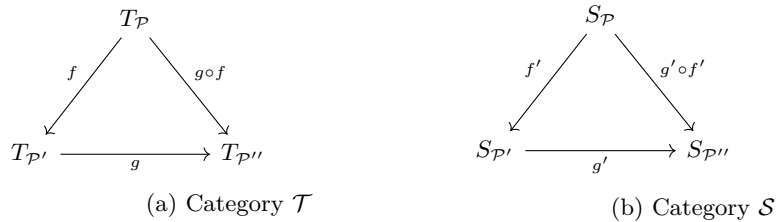
\section{Solving Problems using Functors}
Given problem and solution categories, it is possible to exploit analogies between old and new problems using functors. One could think of an analogy as a structure preserving map from the space of problems to the space of solutions, which rightfully translates to a functor. The analogy <<S' is to S as P' is to P>> can be rewritten as a curried sequence of objects to highlight the transformational aspect: $P \xrightarrow{f} P' \xRightarrow{\mathcal{F}} S \xrightarrow{\mathcal{F} f} S'$. If we know $P$, $S$, and $P'$, and wish to learn about $S'$, we could learn the functor $\mathcal{F}: P' \rightarrow S'$. Using the knowledge about $P'$, how it relates to $P$, and the structure of $\mathcal{F}$, we can either use $\mathcal{F}(P')$ to further learn $S'$ and how it relates to $S$ or use $\mathcal{F}f$ to infer $S'$ from $S$. The solution could be found in different ways and with different complexities \cite{izbicki2013two}, depending on how we traverse the commutative diagram. In the following, we propose to characterise the functor that maps category $\mathcal{T}$ to category $\mathcal{S}$.

\subsection{Existence of functors between problems and solutions}

Whenever we have a collection of problems, we want to be able to know how to relate them. Mapping problems to solutions requires a level of identification between the two. An isomorphism for instance is the type of strong identification between two categories. If two categories are isomorphic, then they are the same and perhaps differ only in notation. However, isomorphisms are in general rare and difficult to characterise. We can instead ``weaken'' the isomorphism by descending from isomorphism of categories to equivalence of categories, and eventually to adjunction of functors between categories \cite{phillips2017general}. This weakening holds in particular for our case of problems since some problems might not have solutions and vice versa. The concept of equivalence of categories is used to identify categories since it is weaker and more generic. We define it by a functor $F: \mathcal{T} \rightarrow \mathcal{S}$ which is an isomorphism of categories up to isomorphisms. If $F$ is an equivalence of categories, then it induces a bijection between the classes of isomorphic objects of $\mathcal{T}$ and $\mathcal{S}$ even if $F$ is not bijective on all the objects \cite{mac2013categories}. Thus the bijection $\mathcal{F}$ is defined as $\mathcal{F}: (\mathcal{T}/\simeq) \mapsto (\mathcal{S}/\simeq)$ and could mainly serve the purpose of identifying classes of problems and solutions as opposed to a one-to-one identification of the components of problems and solutions.

In theorem \ref{functorproof}, we prove the existence of the functor $F: \mathcal{T} \rightarrow \mathcal{S}$ using the previously constructed categories. We will later propose the weaker version of $F$ in terms of equivalences and through a metric on tree problems.

\begin{theorem}
There exists a functor $F$ from the category of tree problems $\mathcal{T}$ to the category of solutions $\mathcal{S}$.
\label{functorproof}
\end{theorem}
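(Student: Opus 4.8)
The plan is to specify the functor $F$ by its two components — an object map and a morphism map — and then verify the identity and composition axioms, reusing the linear-algebra machinery already established in the proofs of Theorem \ref{treeCat} and Theorem \ref{solCat}. On objects, I would simply let $F$ send each characteristic matrix $M_\mathcal{P} \in Ob(\mathcal{T})$ to the solution vector it induces, i.e. $F(M_\mathcal{P}) = S_\mathcal{P} = \mathcal{A}(T_\mathcal{P}) \in Ob(\mathcal{S})$. This is the natural choice, since the algorithm $\mathcal{A}$ is precisely the data that turns a bare tree problem into a solved one, so $F$ on objects is nothing more than ``apply the objective function''.

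On morphisms, recall that in both $\mathcal{T}$ and $\mathcal{S}$ a morphism between two objects $X$ and $Y$ is the canonical transformation built from the Moore-Penrose pseudoinverse, as in (\ref{transfA}); it is therefore \emph{uniquely determined by its source and target}. Consequently I would define $F$ on a morphism $f : T_\mathcal{P} \to T_{\mathcal{P}'}$ to be the unique morphism $F(f) : S_\mathcal{P} \to S_{\mathcal{P}'}$ of $\mathcal{S}$, namely the matrix $B_f = S_{\mathcal{P}'}^{T}(S_\mathcal{P}^{T})^{+}$. The structural observation to make explicit is that both categories are \emph{thin} in the relevant sense — there is essentially one canonical arrow between any ordered pair of objects — so a functor is pinned down by its action on objects alone and the morphism map is forced to be consistent.

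With this in place, the two axioms follow by the same argument used for composition in $\mathcal{T}$. For identity, the morphism $1_{T_\mathcal{P}}$ fixes $M_\mathcal{P}$, hence fixes $S_\mathcal{P}$, so $F(1_{T_\mathcal{P}})$ is the identity on $F(M_\mathcal{P})$. For composition, given $f : T_\mathcal{P} \to T_{\mathcal{P}'}$ and $g : T_{\mathcal{P}'} \to T_{\mathcal{P}''}$ I would run the chain $S_{\mathcal{P}''}^{T} = B_g S_{\mathcal{P}'}^{T} = B_g B_f S_\mathcal{P}^{T} = (B_g B_f) S_\mathcal{P}^{T}$ exactly as in (\ref{compos}), and invoke associativity of matrix multiplication together with uniqueness of the canonical arrow to conclude $B_{g \circ f} = B_g B_f$, that is $F(g \circ f) = F(g) \circ F(f)$.

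The step I expect to be the main obstacle is the morphism map, for two reasons. First, the pseudoinverse does \emph{not} compose — $(XY)^{+} \neq Y^{+}X^{+}$ in general — so composition cannot be proved by manipulating pseudoinverses directly; it must instead be routed through the fact that composition in $\mathcal{S}$ is defined by the solution matrices themselves, letting the thinness/uniqueness argument do the real work. Second, the object map $F = \mathcal{A}$ is total only if every tree problem actually admits a solution, which the paper itself flags is not guaranteed; to keep $F$ a genuine functor I would either restrict $\mathcal{T}$ to the subcategory of solvable problems or pass to the quotient-level map $\mathcal{F} : (\mathcal{T}/\simeq) \to (\mathcal{S}/\simeq)$, which is exactly the weaker equivalence-of-categories formulation the authors announce they will develop next.
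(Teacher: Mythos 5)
Your proposal is correct in substance but takes a genuinely different --- and more constructive --- route than the paper. The paper's own proof never defines $F$ at all: it takes the object map and the morphism map as given, and verifies the two axioms by formal manipulation (e.g.\ writing $F(g \circ f) = F(g(f(T_{\mathcal{P}})))= F(T_{\mathcal{P}''})$, which conflates objects with morphisms, and asserting $F(1_{T_{\mathcal{P}}}) = 1_{F(T_{\mathcal{P}})}$ essentially by declaration). You instead supply the missing data explicitly: $F = \mathcal{A}$ on objects, and on morphisms the canonical pseudoinverse arrow $B_f = S_{\mathcal{P}'}^{T}(S_{\mathcal{P}}^{T})^{+}$ in the style of (\ref{transfA}), with functoriality forced by uniqueness of arrows in the target. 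What your approach buys is a proof that actually checks: since objects of $\mathcal{S}$ are row vectors, the canonical arrows there really do compose on the nose --- $B_{g}B_{f} = S''^{T}S'\,S'^{T}S/(\lVert S'\rVert^{2}\lVert S\rVert^{2}) = S''^{T}S/\lVert S\rVert^{2} = B_{g\circ f}$ because $S'S'^{T} = \lVert S'\rVert^{2}$ is a scalar --- so $\mathcal{S}$ (on nonzero solutions) is indiscrete and any object function out of $\mathcal{T}$ extends uniquely to a functor; you correctly anticipated that routing the argument through this uniqueness, rather than through $(XY)^{+} = Y^{+}X^{+}$ (which is false in general), is where the real work lies. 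Two small caveats: your thinness claim is only needed (and only safe) for $\mathcal{S}$, since in $\mathcal{T}$ the equation $AX^{T}=Y^{T}$ generally admits many solutions $A$ and the higher-rank canonical arrows need not compose exactly, so you should drop the assertion that both categories are thin; and your totality concern is well placed --- the paper itself concedes that ``some problems might not have solutions'' yet proves theorem \ref{functorproof} without addressing it, whereas your restriction to the solvable subcategory (or to the quotient map $\mathcal{F}: (\mathcal{T}/\simeq) \to (\mathcal{S}/\simeq)$) repairs a gap the paper leaves open.
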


\begin{proof}

For $F$ to be a functor from $\mathcal{T}$ to $\mathcal{S}$, we must show that $F$ preserves identity morphisms and composition of morphisms as introduced in section \ref{overviewCT}.

\begin{enumerate}
    \item Identity: Let $T_{\mathcal{P}} \in \mathcal{T}$ be given and let $1_{T_\mathcal{P}}$ be the identity morphism in $\mathcal{T}$ corresponding to $T_{\mathcal{P}}$. Let $1_{F({\mathcal{P}})}$ be the identity morphism in $\mathcal{S}$ corresponding to $F(T_{\mathcal{P}})$. We need to show that $F(1_{T_\mathcal{P}})=1_{F({T_\mathcal{P}})}$. In the category $\mathcal{T}$, the identity morphism $1_{T_\mathcal{P}}$ creates a trivial tree problem from an existing one. Similarly, in $\mathcal{S}$, the identity morphism $1_{F({T_\mathcal{P}})}$ also creates a trivial structure from the same solution. The functor $F$ maps the morphism $1_{T_\mathcal{P}}: {T_\mathcal{P}} \mapsto {T_\mathcal{P}}$ in $\mathcal{T}$ to $F(1_{T_\mathcal{P}}): F({T_\mathcal{P}})  \mapsto F({T_\mathcal{P}})$ in $\mathcal{S}$.
    
    Therefore, $F(1_{T_\mathcal{P}}) = 1_{F({T_\mathcal{P}})}$ and the functor $F$ preserves identity morphisms.
  
        \item Composition: Let $f, g \in {Mor}_{\mathcal{T}}$ such that $f: T_{\mathcal{P}} \mapsto T_{\mathcal{P}'}$ and $g: T_{\mathcal{P}'} \mapsto T_{\mathcal{P}''}$. Let also $F(f), F(g) \in {Mor}_{\mathcal{S}}$ be such that $F(f): F(T_{\mathcal{P}}) \mapsto T_{\mathcal{P}'}$ and $F(g): F(T_{\mathcal{P}'}) \mapsto T_{\mathcal{P}''}$. We need to show that $F(g \circ f) = F(g) \circ F(f)$. We have $F(g \circ f) = F(g(f(T_{\mathcal{P}}))) = F(g(T_{\mathcal{P}'})) = F(T_{\mathcal{P}''})$ and    
    $F(g) \circ F(f) = F(g(F(f(T_{\mathcal{P}})))) = F(g(F(T_{\mathcal{P}'}))) = F(T_{\mathcal{P}''})$.
    
    Hence $F$ preserves the composition of morphisms.
\end{enumerate}

\end{proof}

$F$ is therefore a functor from $\mathcal{T}$ to $\mathcal{S}$ and has the commutative diagram of figure \ref{functorCommDiag}.
\begin{figure}
    \centering
            \begin{tikzcd}[row sep=1.5em,column sep=3em]
            &&&& S_{\mathcal{P}} \arrow[ddl,swap,"F (f)"] \arrow[ddr,"F(g \circ f)"] \\
            & T_{\mathcal{P}} \arrow[ddl,swap,"f"] \arrow[urrr,dashed,"F", style={pos=0.7}] \\
            &&& S_{\mathcal{P}'} \arrow[rr,"F (g)"] &&
                S_{\mathcal{P}''}\\
T_{\mathcal{P}'} \arrow[urrr,dashed,"F", style={pos=0.8}] \arrow[rr,swap,"g"] &&
T_{\mathcal{P}''} \arrow[urrr,dashed, "F", style={pos=0.4}] \arrow[uul,leftarrow,swap,crossing over, "g \circ f", style={pos=0.6}]
            \end{tikzcd}
    \caption{Commutative diagram of the functor $F$}    
    \label{functorCommDiag}
\end{figure}
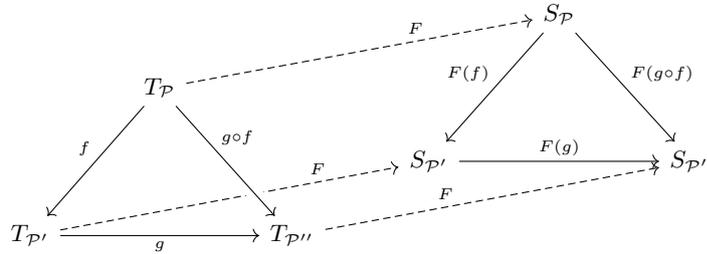

\subsection {From equivalence to metric}

The equivalences of categories of trees $(\mathcal{T}/\simeq)$ define what can be identified as the level of similarities or analogy between the problems that they represent. Similarly, the equivalence of categories of solutions $(\mathcal{S}/\simeq)$ defines the levels of similarities between solutions. If the tree $T_{\mathcal{P}} \in Ob(\mathcal{T})$ is analogous to other trees $\{T_{\mathcal{P}'}\}_{{\mathcal{P}'} \neq {\mathcal{P}}}$, it would be useful to find the ``most'' analogous ones, for instance to transfer knowledge between the closest ones \cite{taylor2009transfer}. This could be done by defining a distance that measures how analogous they are: the more analogous $T_{\mathcal{P}}$ and $T_{\mathcal{P}'}$ are, the smaller $d(T_{\mathcal{P}}, T_{\mathcal{P}'})$ should be.
We propose to construct such a distance on $Ob(\mathcal{T})$ and $Ob(\mathcal{S})$ to identify the objects more or less similar. Recall that by definition, a binary relation $\simeq$ is an equivalence relation if and only if satisfies reflexivity, symmetry and transitivity. 
These conditions are satisfied by the equality relation $=$ and are ``natural'' to express what a notion of analogy should satisfy. In that way, equality can be viewed as a particular
case of analogy. On the other hand, analogies, as formalised by the concept of equivalence relations, can be viewed as generalisation of equality.

\subsection{Problem and solution metrics}
	
A metric is the mathematical notion of distance that give structure and shape to a set of objects by forming a space. A function $d(T_{\mathcal{P}}, T_{\mathcal{P}'})$ is a tree problem metric if, for all $T_{\mathcal{P}}, T_{\mathcal{P}} \in Ob(\mathcal{T})$:
\begin {enumerate}
    \item Distances are non-negative: $d(T_{\mathcal{P}}, T_{\mathcal{P}'}) \ge 0$ 
    \item Distance is equal to zero when trees are identical: $d(T_{\mathcal{P}}, T_{\mathcal{P}'}) = 0   \iff T_{\mathcal{P}} = T_{\mathcal{P}'}$
    \item Distance is symmetric: $d(T_{\mathcal{P}}, T_{\mathcal{P}'}) = d(T_{\mathcal{P}'}, T_{\mathcal{P}})$
    \item Distances satisfy the triangle inequality: $\forall T_{\mathcal{P}''} \in Ob(\mathcal{T})$, $d(T_{\mathcal{P}}, T_{\mathcal{P}'}) \leq d(T_{\mathcal{P}}, T_{\mathcal{P}''}) + d(T_{\mathcal{P}''}, T_{\mathcal{P}'})$
\end {enumerate}

Now we can define the tree metric based on the characteristic function (\ref{cvx}).
   
\begin{theorem}
The function $d_\lambda: Ob(\mathcal{T}) \times Ob(\mathcal{T}) \mapsto \mathbb{R}$ given by $d_\lambda(T_{\mathcal{P}}, T_{\mathcal{P}'}) = \lVert \phi_\lambda(T_{\mathcal{P}}) - \phi_\lambda(T_{\mathcal{P}'}) \rVert$ is a metric on $Ob(\mathcal{T})$, with $\lVert . \rVert$ being the Euclidean distance and $\lambda \in [0,1]^{m+1}$.
\end{theorem}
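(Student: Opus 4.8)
The plan is to recognise $d_\lambda$ as the pullback of the Euclidean metric along the map $\phi_\lambda\colon Ob(\mathcal{T}) \to \mathbb{R}^{\binom{k}{2}+k}$, $T \mapsto \lambda M(T)$, and then to check the four axioms in turn. Three of them — non-negativity, symmetry, and the triangle inequality — are inherited verbatim from the Euclidean norm $\lVert\,\cdot\,\rVert$ and hold for \emph{any} choice of $\phi_\lambda$ whatsoever. Writing $u = \phi_\lambda(T_{\mathcal{P}})$, $v = \phi_\lambda(T_{\mathcal{P}'})$, $w = \phi_\lambda(T_{\mathcal{P}''})$, one reads off $\lVert u - v \rVert \ge 0$, $\lVert u - v \rVert = \lVert v - u \rVert$, and $\lVert u - v \rVert \le \lVert u - w \rVert + \lVert w - v \rVert$ directly from the metric properties of $\lVert\,\cdot\,\rVert$ on $\mathbb{R}^{\binom{k}{2}+k}$. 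Consequently the whole content of the theorem concentrates in the second axiom, the identity of indiscernibles, and the first part of the write-up will dispatch the other three in a single line.

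For identity of indiscernibles the direction $T_{\mathcal{P}} = T_{\mathcal{P}'} \Rightarrow d_\lambda = 0$ is immediate, since equal trees share the same characteristic matrix and hence the same image under $\phi_\lambda$. The converse, $d_\lambda(T_{\mathcal{P}}, T_{\mathcal{P}'}) = 0 \Rightarrow T_{\mathcal{P}} = T_{\mathcal{P}'}$, is precisely the assertion that $\phi_\lambda$ is injective on $Ob(\mathcal{T})$, and this is where the real work lies. I would factor the injectivity into two steps. First I would establish that the characteristic matrix is faithful, i.e.\ $M(T) = M(T') \iff T = T'$: this is exactly the ``canonical characterisation'' claimed in Section~\ref{charaMsection}, and it rests on the cophenetic encoding — the first row records, for every pair $(i,j)$ of tips, the depth of their most recent common ancestor, which reconstructs the labelled topology $\simeq$, while the remaining rows record the corresponding edge features, so that the full labelling $\mathcal{L}_E$ is recoverable. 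Second, I would argue that post-composing the faithful $M$ with the linear contraction $M \mapsto \lambda M$ preserves injectivity, thereby upgrading $\phi_\lambda(T) = \phi_\lambda(T')$ to $M(T) = M(T')$ and hence to $T = T'$.

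The hard part is exactly this second step. The map $M \mapsto \lambda M$ collapses the $m+1$ rows of the characteristic matrix into a single weighted row, and a linear projection of an injective encoding need not remain injective, since two distinct trees could in principle yield characteristic matrices whose $\lambda$-weighted column sums coincide. To close the argument I would exploit the structural asymmetry of $M(T)$: its first row is integer-valued (it counts edges) and its pendant block is pinned to the constant $1$, whereas the feature rows are unconstrained reals. Taking $\lambda$ with strictly positive coordinates and matching the pendant columns first — where the topology contribution $\lambda_1 \cdot 1$ is fixed — one aims to peel off the topological row, and then recover the feature rows column by column, forcing equality of every constituent entry. The main obstacle is to make this separation of the topology row from the feature rows rigorous for \emph{every} admissible $\lambda \in [0,1]^{m+1}$ rather than merely for a generic one; handling the degenerate weights on the boundary of the simplex, where a coordinate of $\lambda$ vanishes and information is genuinely discarded, is the delicate point on which the passage from ``free'' axioms plus faithfulness of $M$ to a bona fide metric ultimately depends.
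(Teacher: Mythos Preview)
The paper's own ``proof'' is a one-line deferral: it simply states that the argument is very similar to the one for phylogenetic trees and cites Kendall--Colijn. Your write-up is therefore not so much a different route as an attempt to actually \emph{unpack} what that deferral contains, and for the three easy axioms (non-negativity, symmetry, triangle inequality) your pullback argument is exactly right and exactly what the cited reference does.

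Where your proposal goes beyond the paper is in isolating the identity of indiscernibles as the only substantive step and in being honest that the argument you sketch does not close. You are right to worry. The factorisation ``$M$ is faithful, then $M\mapsto\lambda M$ preserves injectivity'' is the natural plan, but the second step is simply false for arbitrary $\lambda\in[0,1]^{m+1}$: if any coordinate $\lambda_j$ vanishes, the $j$th row of $M$ is discarded, and two trees differing only in that feature row become indistinguishable under $\phi_\lambda$; more subtly, even for interior $\lambda$ one can have distinct matrices $M\neq M'$ with $\lambda M=\lambda M'$ unless further structural constraints on the rows of $M$ are invoked. The Kendall--Colijn result the paper leans on handles this in the phylogenetic setting by a careful choice of what goes into the two vectors (topological depths versus pendant lengths) and, crucially, by restricting the convex parameter so that the topology component is never fully suppressed. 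The theorem here is stated for \emph{all} $\lambda\in[0,1]^{m+1}$ with no such restriction, so the gap you flag at the end of your proposal is not merely a delicate point in your argument --- it is a genuine gap in the statement itself, and no proof strategy will close it without either constraining $\lambda$ to have strictly positive entries or imposing additional hypotheses on the feature rows of $M$.
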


\begin{proof}
The proof is very similar to the one for phylogenetic trees \cite{kendall2015tree}.
\end{proof}

In a similar way, we could prove that there exists a metric on the solution space. This case is more trivial since the solutions are binary vectors of $\{0, 1\}^n$.

\begin{theorem}
The function $d: Ob(\mathcal{S}) \times Ob(\mathcal{S}) \mapsto \mathbb{R}$ given by $d(S_{\mathcal{P}}, S_{\mathcal{P}'}) = \lVert S_{\mathcal{P}} - S_{\mathcal{P}'} \rVert$ is a metric on $Ob(\mathcal{S})$, with $S_{\mathcal{P}}, S_{\mathcal{P}'} \in \{0, 1\}^n$ and $\lVert . \rVert$ being the Euclidean distance.
\end{theorem}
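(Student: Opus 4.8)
The plan is to verify the four metric axioms directly, exploiting the fact that the claimed distance is the composition of the Euclidean norm with the affine map $S_{\mathcal{P}} \mapsto S_{\mathcal{P}}$ on the finite point set $\{0,1\}^n \subset \mathbb{R}^n$. Since $d(S_{\mathcal{P}}, S_{\mathcal{P}'}) = \lVert S_{\mathcal{P}} - S_{\mathcal{P}'}\rVert$ is simply the restriction of the standard Euclidean metric on $\mathbb{R}^n$ to the subset $\{0,1\}^n$, and since the restriction of any metric to a subset is again a metric, the result is essentially immediate. I would nonetheless present the axioms explicitly, because the statement asks us to confirm they hold on $Ob(\mathcal{S})$.

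First I would recall that the Euclidean norm satisfies $\lVert v \rVert \ge 0$ for all $v \in \mathbb{R}^n$, which gives non-negativity of $d$. For the identity-of-indiscernibles axiom, I would observe that $\lVert S_{\mathcal{P}} - S_{\mathcal{P}'}\rVert = 0$ holds if and only if $S_{\mathcal{P}} - S_{\mathcal{P}'} = 0$, i.e. $S_{\mathcal{P}} = S_{\mathcal{P}'}$; here ``identical'' means equality of the binary vectors, which is the notion of object equality in $\mathcal{S}$. Symmetry follows from $\lVert S_{\mathcal{P}} - S_{\mathcal{P}'}\rVert = \lVert -(S_{\mathcal{P}'} - S_{\mathcal{P}})\rVert = \lVert S_{\mathcal{P}'} - S_{\mathcal{P}}\rVert$, using positive homogeneity of the norm with scalar $-1$. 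For the triangle inequality I would write $S_{\mathcal{P}} - S_{\mathcal{P}'} = (S_{\mathcal{P}} - S_{\mathcal{P}''}) + (S_{\mathcal{P}''} - S_{\mathcal{P}'})$ and apply subadditivity of the norm, which yields the desired bound for every $S_{\mathcal{P}''} \in Ob(\mathcal{S})$.

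There is essentially no obstacle here, so the only thing worth noting is the single point that could be mishandled. The binary constraint $S_{\mathcal{P}} \in \{0,1\}^n$ plays no role beyond guaranteeing that the vectors live in $\mathbb{R}^n$ so that the Euclidean norm is well defined; one must simply make sure that all solution vectors under comparison share the same ambient dimension $n$, since otherwise the difference $S_{\mathcal{P}} - S_{\mathcal{P}'}$ is not defined. Provided the category $\mathcal{S}$ is understood to collect solutions of a fixed edge-count $n$ (or, more generally, provided differences are only taken between dimension-matched objects, which is implicit in the subtraction appearing in the statement), every axiom reduces to a standard property of the Euclidean norm and the proof is complete.

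\begin{proof}
Write $d(S_{\mathcal{P}}, S_{\mathcal{P}'}) = \lVert S_{\mathcal{P}} - S_{\mathcal{P}'}\rVert$ for $S_{\mathcal{P}}, S_{\mathcal{P}'} \in \{0,1\}^n \subset \mathbb{R}^n$; this is the restriction to $\{0,1\}^n$ of the Euclidean metric on $\mathbb{R}^n$, and the restriction of a metric to any subset is again a metric. Explicitly, non-negativity follows from $\lVert v \rVert \ge 0$; the identity of indiscernibles from the fact that $\lVert S_{\mathcal{P}} - S_{\mathcal{P}'}\rVert = 0$ iff $S_{\mathcal{P}} = S_{\mathcal{P}'}$; symmetry from $\lVert S_{\mathcal{P}} - S_{\mathcal{P}'}\rVert = \lVert S_{\mathcal{P}'} - S_{\mathcal{P}}\rVert$; and the triangle inequality from subadditivity of the norm applied to $S_{\mathcal{P}} - S_{\mathcal{P}'} = (S_{\mathcal{P}} - S_{\mathcal{P}''}) + (S_{\mathcal{P}''} - S_{\mathcal{P}'})$. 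Hence $d$ is a metric on $Ob(\mathcal{S})$.
\end{proof}
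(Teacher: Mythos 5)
Your proof is correct, and it takes the same route the paper intends: the paper states this theorem without an explicit proof, remarking only that the case is trivial because solutions are binary vectors in $\{0,1\}^n$, and your argument (restriction of the Euclidean metric on $\mathbb{R}^n$ to the subset $\{0,1\}^n$, with each axiom inherited from the norm) is precisely the standard verification being gestured at. Your added caveat that all solution vectors must share the same ambient dimension $n$ for the subtraction to make sense is a reasonable observation that the paper leaves implicit.
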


The metrics can be used to measure how problems and solutions are relatable. This way of characterising the existence of functors allows us to find the most analogous known problem(s) to a given situation. Given a target problem $P$ we could find the set $\{(P', S')\}_{P' \simeq P}$ of equivalent problems that were previously solved, find the convex transformation $f$ that maps $P'$ to $P$ and compute $S$ as $\mathcal{F}f(S')$. This transformation is the type of transfer of knowledge from past to new situations.
 
\section{Conclusions}

The paper proposes a category-theoretic approach that formalises problems that are represented as tree-like structures. The existence of equivalence relationships across the categories of problems and their corresponding categories of solutions is established using functors. Implementing the functors corresponds therefore to solving the problems through means of analogy.

The proposed formalism has yet to be tested on concrete instances of tree-like problems such as maze problems. The future direction is to characterise the functors as encoders in a way similar to \cite{reed2015deep} and learn the generalised solutions to different maze problem.

\bibliographystyle{splncs04}
\bibliography{ref}

\end{document}